\documentclass[runningheads]{llncs}
\usepackage[english]{babel}

\usepackage{xcolor}
\usepackage[T1]{fontenc}
\usepackage{amsmath}      
\usepackage{amsfonts}
\usepackage{algorithm}
\usepackage{algorithmic}
\usepackage{booktabs}
\usepackage{multirow}
\usepackage{makecell}
\usepackage{hyperref}
\usepackage{graphicx}
\begin{document}
\title{Concisely Explaining the Doubt: Minimum-Size Abductive Explanations for Linear Models with a Reject Option}
\titlerunning{Minimum-Size AXps for Linear Models with a Reject Option}
%
\author{Gleilson Pedro Fernandes \and
Thiago Alves Rocha} 
%
\authorrunning{Fernandes and Rocha}
%
\institute{Instituto Federal do Ceará (IFCE), Brazil \\
\email{gleilson@aluno.ifce.edu.br} and \email{thiago.alves@ifce.edu.br}}
\maketitle              
\begin{abstract}
Trustworthiness in artificial intelligence depends not only on what a model decides, but also on how it handles and explains cases in which a reliable decision cannot be made. In critical domains such as healthcare and finance, a reject option allows the model to abstain when evidence is insufficient, making it essential to explain why an instance is rejected in order to support informed human intervention. In these settings, explanations must
not only be interpretable, but also faithful to the underlying model and computationally efficient enough to support real-time decision making. Abductive explanations guarantee fidelity, but their exact computation is known to be NP-hard for many classes of models, limiting their practical applicability. Computing \textbf{minimum-size} abductive explanations is an even more challenging problem, as it requires reasoning not only about fidelity but also about optimality. Prior work has addressed this challenge in restricted settings, including log-linear-time algorithms for computing minimum-size abductive explanations in linear models without rejection, as well as a polynomial-time method based on linear programming for computing abductive explanations, without guarantees of minimum size, for linear models with a reject option. In this work, we bridge these lines of research by computing minimum-size abductive explanations for linear models with a reject option. For accepted instances, we adapt the log-linear algorithm to efficiently compute optimal explanations. For rejected instances, we formulate a 0–1 integer linear programming problem that characterizes minimum-size abductive explanations of rejection. Although this formulation is NP-hard in theory, our experimental results show that it is consistently more efficient in practice than the linear-programming-based approach that does not guarantee minimum-size explanations.

\keywords{Explainable AI \and Abductive Explanations \and Rejection Option.}
\end{abstract}

\section{Introduction}

Machine learning systems have been increasingly incorporated into decision-making processes that directly affect people's lives, such as medical diagnosis, credit approval, and applications in the justice system. In these scenarios, it is not sufficient for a model to achieve high predictive accuracy. It is equally fundamental that its decisions can be understood, audited, and considered trustworthy by human users and regulatory bodies. This challenge has driven advances in Explainable Artificial Intelligence (XAI), whose central goal is to make model behavior more transparent and justifiable.

However, explanations are only truly useful in practice when they combine three essential properties: interpretability, fidelity to the underlying model, and computational efficiency. Methods that sacrifice fidelity in favor of speed may induce users into a false sense of confidence, especially in high-risk domains \cite{lipton2018mythos,rudin2019stop,slack2020fooling}. 

In this context, abductive explanations, grounded in formal logic, have received significant attention \cite{ignatiev2019abduction,marquessilva2022logic}. Such explanations identify non-redundant subsets of features that are sufficient to support a model decision, regardless of the values taken by the remaining features, thereby offering strong guarantees of fidelity \cite{ignatiev2019abduction,darwiche2020reasons}. However, these explanations are not necessarily of minimum size, which can hinder interpretability, as smaller explanations are easier for human decision-makers to inspect, communicate, and act upon. Furthermore, obtaining such explanations typically requires exploring an exponential combinatorial space, making exact approaches impractical for medium- and large-scale models, particularly in the case of deep neural networks with high-dimensional inputs \cite{ignatiev2019abduction}.

Recent work \cite{marquessilva2020,barcelo2020model} have shown that, for specific classes of models, it is possible to circumvent these limitations by exploiting structural properties. In particular, linear classifiers allow the efficient derivation of minimum-size abductive explanations in log-linear-time, by leveraging the bounded domain of the input features \cite{marquessilva2020}.

Most existing explanation methods implicitly treat all predictions in the same way, regardless of the level of uncertainty associated with them. This becomes problematic in scenarios where the model operates under high uncertainty, which are common in high-stakes domains. To address this issue, a \emph{reject option} allows models to abstain when the available evidence is insufficient to support a reliable conclusion \cite{chow1970,geifman2019selectivenet,hendrickx2024machine}.

While the reject option improves decision safety by preventing unreliable predictions, it also raises a fundamental explanatory question: ``\emph{why} was a given instance rejected?'' \cite{artelt2023idontknow}. From an XAI perspective, rejection does not eliminate the need for explanations; rather, it changes their role. Instead of justifying a specific predicted outcome, explanations for rejected instances must clarify the conditions under which the model is unable to reach a reliable conclusion, so that human experts can understand the source of uncertainty and decide how to proceed \cite{rochafilho2023,punzi2024l2lore}. In this setting, minimum-size abductive explanations become particularly important. Explanations based on a small number of features more clearly expose the sources of uncertainty in the model’s decision process, facilitating efficient inspection and informed intervention by human experts.

Linear models with a reject option could also be particularly valuable as
\emph{surrogate} explainers for complex predictors. Instances falling within the surrogate model’s rejection region might correspond to areas of high uncertainty in the original predictor. Rather than forcing potentially misleading explanations, rejection may signal the limits of the surrogate’s reliability. In this setting, minimum-size abductive explanations computed for rejected instances can help expose a smallest set of features that is sufficient to characterize
this uncertainty, supporting efficient inspection and informed human
intervention. We leave for future work the exploration of minimum-size abductive explanation techniques using linear surrogate models with a reject option for explaining complex predictors.

In light of the considerations above, we propose \textbf{MINABRO} (\textit{MINimum-size ABductive explanations for linear models with a Reject Option}). MINABRO explicitly accounts for the presence of a reject option in linear models. For accepted predictions, MINABRO employs a greedy algorithm adapted from \cite{marquessilva2020}, which was originally designed for linear classifiers without a rejection mechanism. Our adaptation provably computes minimum-size abductive explanations for linear models with a reject option while preserving the same log-linear time complexity as the original method. For rejected predictions, MINABRO relies on a $0$--$1$ ILP (Integer Linear Programming) formulation to compute minimum-size abductive explanations of rejection. Although this formulation is NP-hard in theory, our experimental results show that it remains computationally efficient in practice and often outperforms the linear-programming-based approach of \cite{rochafilho2023}, which computes abductive explanations but does not guarantee minimum size.


Our experimental evaluation on ten public datasets demonstrates the advantages of MINABRO over two reference methods: Anchors, a sampling-based heuristic, and AbLinRO, a logic-based method that employs linear programming. While both MINABRO and AbLinRO ensure full fidelity to the underlying model, AbLinRO does not guarantee that the resulting explanations are of minimum size. Regarding computational efficiency, MINABRO is orders of magnitude faster than both reference mthods, maintaining stable execution times even in high-dimensional scenarios like MNIST. Notably, even when solving the NP-hard formulation for rejected instances, MINABRO remains more efficient than the linear-programming-based AbLinRO. Furthermore, while the heuristic nature of Anchors leads to shorter but unfaithful explanations, MINABRO provably achieves the smallest possible explanation size among the ones that guarantee full model fidelity. Furthermore, MINABRO consistently produces more concise explanations than AbLinRO, particularly for rejected instances, where explanations tend to be larger. This difference is further illustrated through a visual example of a rejected digit from the MNIST dataset. While both MINABRO and AbLinRO identify features sufficient to justify the rejection, MINABRO's approach isolates a more compact set of pixels.

The remainder of this paper is organized as follows. Section 2 reviews related work on abductive explanations and reject options in machine learning. Section 3 introduces the necessary background on linear classifiers and formal explainability. In Section 4, we present MINABRO, describing our greedy algorithm for accepted instances and our 0--1 ILP formulation for rejected predictions. Section 5 details the experimental evaluation, comparing our approach with reference methods in terms of efficiency and conciseness. Finally, Section 6 concludes the paper and discusses directions for future work.

\section{Related Work}


The field of XAI has experienced significant growth in recent years, driven by the need to make machine learning systems more transparent, auditable, and trustworthy, especially in high-risk applications. In general, explanation methods can be classified into model-agnostic approaches, which construct explanations from local approximations or surrogate models, and model-specific approaches, which directly exploit the internal structure of the classifier. Among the most popular model-agnostic methods are LIME \cite{ribeiro2016lime} and Anchors \cite{ribeiro2018anchors}, which are widely used due to their flexibility and applicability to black-box models. However, these approaches are essentially based on sampling and probabilistic estimates, and therefore do not provide deterministic guarantees of fidelity \cite{ignatiev2019validating}. As a consequence, they may fail to correctly capture complex decision boundaries and are susceptible to adversarial manipulation \cite{slack2020fooling}.

In contrast, methods based on formal logic have been proposed with the goal of providing explanations that are directly grounded in the true behavior of the model. Logic-based XAI approaches have been developed recently to offer explanations with such guarantees for a variety of machine learning models \cite{shi2018,ignatiev2019abduction,marquessilva2020,darwiche2020reasons,izza2021explaining,gorji2022sufficient,marquessilva2022logic,bassan2023towards,audemard23computing,rochafilho2023}. In this context, abductive explanations are minimal subsets of features that support the classifier’s decision under any admissible values of the remaining features. That is, no feature in the explanation can be removed without invalidating the decision.


Despite their desirable properties, computing abductive explanations is, in general, an NP-hard problem \cite{ignatiev2019abduction}, which limits the scalability of these approaches. Computing minimum-
size abductive explanations is an even more challenging problem. To overcome this limitation, recent research \cite{marquessilva2020} has focused on specific classes of models, seeking more efficient algorithms at the cost of reduced generality. In particular, \cite{marquessilva2020} shows that minimum-size
abductive explanations for linear models can
be computed in log-linear time by exploiting both the structure of the model and the bounded domain of the
input attributes. Our work builds upon this line of research by extending minimum-size abductive
explanations to the setting where the linear model includes a reject option, which requires explanation mechanisms that remain faithful to the semantics of abstention.

The reject option has been extensively studied as a mechanism to improve safety by abstaining under uncertainty \cite{cortes2016learning,hendrickx2024machine}. The idea of allowing a classifier to abstain from making a decision was introduced by Chow \cite{chow1970}, who showed that, instead of forcing a potentially incorrect decision, the model can choose to reject instances for which it lacks sufficient confidence, thereby establishing an explicit trade-off between error and abstention.

More recently, several works have investigated the complementary problem of \emph{explaining} rejection decisions. L2loRe \cite{punzi2024l2lore} proposes a dedicated method to explain the reject option, focusing on producing human-understandable explanations of abstentions. Similarly, \cite{artelt2022explaininglvq} studies contrastive explanations for rejections in the context of Learning Vector Quantization classifiers. Other model-agnostic approaches have also been proposed, including local example-based explanations of rejection \cite{artelt2023idontknow}, semifactual explanations designed to be diverse \cite{artelt2022evenif}, and local post-hoc explainers for reject decisions \cite{artelt2022modelagnostic}. In addition, \cite{stradiotti2025learning} addresses a related but distinct setting, in which the system learns to reject \emph{low-quality explanations} based on user feedback, rather than rejecting predictions.

In the domain of linear classifiers, \cite{rochafilho2023} integrated a rejection option into the process of logical explanation for linear SVM classifiers, formulating the problem as instances of linear programming that generates explanations both for accepted decisions and for the rejection class. However, the resulting explanations are only \emph{subset-minimal} and are not guaranteed to have minimum size, which can negatively affect interpretability when explanations remain unnecessarily large. In contrast, our work targets \emph{minimum-size} abductive explanations both for accepted and rejected predictions.



\section{Background}\label{background}

\paragraph{Machine Learning and Binary Classification Problems.} In machine learning, binary classification problems are defined over a set of features $\mathcal{F} = \{f_1, ..., f_n\}$ and a set of two classes $\mathcal{K} = \{-1, +1\}$. In this paper, we consider that each feature $f_i \in \mathcal{F}$ takes values in a bounded real domain
$[l_i, u_i] \subseteq \mathbb{R}$, where $l_i \leq u_i$. An instance or specific data point is a feature assignment
$\mathbf{x} = \{f_1 = x_1, \ldots, f_n = x_n\}$ such that each
$x_i \in [l_i, u_i]$. When convenient, we also represent the same instance as a numerical feature vector $\mathbf{x} = (x_1, \ldots, x_n) \in \mathbb{R}^n$.

A binary classifier $C$ is a function that maps elements in the feature space into the set of classes $\mathcal{K}$. For example, $C$ can map instance $\{f_1 = x_1, f_2 = x_2, ..., f_n = x_n\}$ to class $+1$. Usually, the classifier is obtained by a training process given as input a training set $\{ \mathbf{x}_i,y_i\}^{l}_{i=1}$, where $\mathbf{x}_i \in \mathbb{R}^n$ is an input vector or pattern, $y_i \in \{-1, +1\}$ is the respective class label, and $l$ is the number of input vectors. Then, for each input vector $\mathbf{x}_i$, its input values $x_{i, 1}$, $x_{i, 2}$, ..., $x_{i, n}$ are in the domain of the corresponding features $f_1$, ..., $f_n$. 

\paragraph{Linear Models and Reject Option Classification.}
As a binary classifier, we consider a linear model defined by a given weight vector $\mathbf{w} \in \mathbb{R}^n$ and a given bias term $b \in \mathbb{R}$. Then, the classifier decision is based on the linear score $s(\mathbf{x}) = \mathbf{w} \cdot \mathbf{x} + b$, where $\mathbf{w} \cdot \mathbf{x}$ denotes the inner product between vectors $\mathbf{w}$ and $\mathbf{x}$. The reject option in classification refers to techniques that improve the reliability of decision support systems by abstaining from making a prediction when the model’s output is deemed insufficiently certain \cite{chow1970,hendrickx2024machine}. In binary classification, this typically corresponds to withholding a class
label for instances that lie near the classifier’s uncertainty region,
such as close to the decision boundary in linear models. In practical applications, these abstained instances are often deferred to alternative decision processes, including more complex models or human experts
\cite{el2010foundations}. This behavior is particularly advantageous in high-stakes domains where misclassifications can have severe consequences, making rejection more desirable than risking incorrect predictions. According to~\cite{chow1970}, the optimal classifiers that best handle such a relation can be achieved by the minimization of the empirical risk $\hat{R} = E + w_r R$, where $R$ is the ratio rejected instances, $E$ is the ratio of misclassified instances without including those ones rejected, and $w_r$ is a weight denoting the cost of rejection. A lower $w_r$ gives room for a decreasing error rate at the cost of a higher quantity of rejected instances, with the opposite happening for a higher $w_r$.

Several formulations of the reject option have been proposed in the literature \cite{cortes2016learning,geifman2019selectivenet,mesquita2016classification}. For example, reject rules are often implemented via simple confidence criteria,
particularly in settings where classifiers do not provide calibrated probabilistic outputs \cite{mesquita2016classification}. For linear models, straightforward rejection techniques are commonly based on the distance of instances to the separating hyperplane \cite{mesquita2016classification,rochafilho2023}. If the distance is below a predefined threshold, the instance is withheld from classification, defining a rejection region around the boundary. Such thresholds are determined after model training and encode a desired confidence level in the classifier’s outputs. Therefore, applying this strategy to a linear classifier $C$ leads to the following prediction cases:
\begin{equation}\label{reject_predicition_output}
    C(\mathbf{x}) = 
    \begin{cases}
        \begin{aligned}
         &+1, \quad \text{if } s(\mathbf{x}) > t_+,\\
         &-1, \quad \text{if } s(\mathbf{x}) < t_-,\\
         &\hspace{0.45cm} 0, \quad \text{otherwise},
        \end{aligned}    
    \end{cases} 
\end{equation}
where $t_+$ and $t_-$, with $t^{-} < t^{+}$, are the thresholds for the positive class and negative class, respectively, and 0 is the rejection class. Furthermore, these thresholds are chosen to generate the optimal reject region, which corresponds to the region that minimizes the empirical risk by producing both the ratio of misclassified patterns $E$ and the ratio of rejected patterns $R$. This formulation allows the classifier to abstain from deciding in regions of uncertainty, avoiding potentially inaccurate decisions when the linear score lies between
the thresholds.

\paragraph{Abductive Explanations.} As previously mentioned, most methods for explaining ML models are heuristic, resulting in explanations that can not be fully trusted. Logic-based explainability provides a rigorous alternative to heuristic methods by offering formal guarantees of correctness and irredundancy. Following the framework established by \cite{ignatiev2019abduction}, an \textit{abductive explanation} (AXp) identifies a minimal subset of input features that is sufficient to justify a model's prediction, regardless of the values taken by the remaining features. The condition of subset-minimality ensures that an AXp is irredundant: if any feature is removed from the explanation, the remaining set no longer guarantees the prediction. Formally, we define an AXp as follows:

\begin{definition}[Abductive Explanation \cite{ignatiev2019abduction}]
Let $\mathbf{x} = \{f_1 = x_1, ..., f_n = x_n\}$ be an instance and $C$ be a classifier such that $C(\mathbf{x}) \in \mathcal{K}$. An \emph{abductive explanation} $E$ is a minimal subset of $\mathbf{x}$ such that for all $x_1' \in [l_1, u_1], ..., x_n' \in [l_n, u_n]$, if $x'_j = x_j$ for each $f_j = x_j \in E$, then $C(\{f_1 = x'_1, ..., f_n = x'_n\}) = C(\mathbf{x})$.
\end{definition}

An abductive explanation is a minimal set of features from an instance that are sufficient for the prediction. This subset ensures that, when the values of these features are fixed and the other features are varied within their possible ranges, the prediction remains the same. In other words, it identifies key features that are sufficient for the output. Minimality ensures that the explanation $\mathcal{X}$ does not include any redundant features. In other words, removing any feature $x_j = v_j$ from $\mathcal{X}$ would result in a subset that no longer guarantees the same prediction $c$ over the defined bounds.

While an AXp guarantees that no redundant features are included, an instance may still have multiple distinct AXps of different sizes. Different subsets of features may independently satisfy the conditions for an AXp. This distinction leads to the definition of a \textit{minimum-size abductive explanation}, which seeks the most concise justification possible:

\begin{definition}[Minimum-Size Abductive Explanation] Let $\mathbf{x} = \{f_1 = x_1, ..., f_n = x_n\}$ be an instance and $C$ be a classifier such that $C(\mathbf{x}) \in \mathcal{K}$. An abductive explanation $E$ for $\mathbf{x}$ is a \emph{minimum-size abductive explanation} if, for every other abductive explanation $E'$, we have $|E| \le |E'|$. 
\end{definition}

The distinction between AXps and minimum-size AXps is significant from both a computational and a practical perspective. While any AXp ensures the absence of redundant features, finding a minimum-size AXp is generally a more challenging optimization problem, as it requires searching among all possible AXps for one with the smallest cardinality.

\section{MINABRO}

In this section, we present MINABRO, our method for computing
minimum-size abductive explanations for linear classifiers equipped with a
reject option. We cover the two possible outcomes of such classifiers:
\emph{rejection} and \emph{classification}. Accordingly, we first describe how
MINABRO computes explanations for rejected instances. We then describe the classified case.

\subsection{Explaining Rejected Instances}\label{minabro_rejected}

In this section we show how our method computes minimum-size abductive explanations for linear models with a reject option. We begin with the case of rejected instances. 

\begin{definition}[Minimum-size abductive explanation of rejection]
\label{def:min_abd_reject}
Let $\mathbf{x}$ be an instance and $s$ be a score function of a linear classifier such that $t_- < s(\mathbf{x}) < t_+$, i.e., $\mathbf{x}$ is rejected.
A set of feature assignments $E \subseteq \mathbf{x}$ is an
\emph{abductive explanation of rejection} for $\mathbf{x}$ if, for all
$x'_1 \in [l_1,u_1],\ldots,x'_n \in [l_n,u_n]$, whenever $x'_j=x_j$ for every
$f_j=x_j \in E$, it holds that:
\[
t_- < s(\{f_1=x'_1,\ldots,f_n=x'_n\}) < t_+.
\]
Moreover, $E$ is a \emph{minimum-size} abductive explanation of rejection if it
has minimum cardinality among all abductive explanations of rejection for
$\mathbf{x}$.
\end{definition}

Let $\mathbf{x}$ be an instance and let $E \subseteq \mathbf{x}$ be any
candidate explanation. If a feature assignment $f_i = x_i$ is not included in $E$, then all values in its domain $[l_i, u_i]$ must be considered when checking whether the score remains within the rejection region $t_- < s(\{f_1 = x'_1, ..., f_n = x'_n\}) < t_+$. Given any subset $E$ of $\mathbf{x}$, the score $s(E \cup \{f_i = x'_i \mid f_i = x_i \not\in E\})$ can be decomposed into the sum of a fixed contribution, associated with the features in $E$, and a free contribution, associated with the features not in $E$, as follows:
\begin{equation}\label{decomposition_score}
s(E \cup \{f_i = x'_i \mid f_i = x_i \not\in E\}) =
\underbrace{\sum_{f_j = x_j \in E} w_j x_j + b}_{\mathrm{fixed}}
+
\sum_{f_j = x_j \not\in E} w_j x'_j.
\end{equation}

Therefore, for a given subset $E$, verifying whether $t_- < s(E \cup \{f_i = x'_i \mid f_i = x_i \not\in E\}) < t_+$ reduces to analyzing the maximum and minimum values that the free term $\sum_{f_j = x_j \not\in E} w_j x'_j$ can attain under the domain constraints $x'_j \in [l_j,u_j]$.

If the maximum value of $\sum_{f_j = x_j \not\in E} w_j x'_j$ leads to a score $s(E \cup \{f_i = x'_i \mid f_i = x_i \not\in E\})$ greater than or equal to $t_+$, then $E$ cannot be an abductive explanation of rejection. Similarly, if the minimum value
of $\sum_{f_j = x_j \not\in E} w_j x'_j$ leads to a score less than or equal to $t_-$, then $E$ cannot be an abductive explanation. Since both extrema depend on the choice of $E$, this motivates the following definitions:
\[
s_{\max}(E) = \sum_{f_j = x_j \in E} w_j x_j + b + \underset{\{x'_j \mid f_j=x_j \not\in E\}}{\max} \sum_{f_j = x_j \not\in E} w_j x'_j
\]

\[
s_{\min}(E) = \sum_{f_j = x_j \in E} w_j x_j + b + \underset{\{x'_j \mid f_j=x_j \not\in E\}}{\min} \sum_{f_j = x_j \not\in E} w_j x'_j
\]

Since each feature $f_i$ ranges over a bounded domain $[l_i,u_i]$, both
extrema can be computed in closed form. To maximize $\sum_{f_j = x_j \notin E} w_j x'_j$, we independently choose, for each free feature, the value that maximizes its individual contribution. In particular, if $w_j \ge 0$ then the contribution is maximized by setting $x'_j = u_j$, whereas if $w_j < 0$ then it is maximized by setting $x'_j = l_j$. Similarly, to minimize the sum, we set $x'_j = l_j$ when $w_j \ge 0$, and $x'_j = u_j$ when $w_j < 0$. Features with $w_j = 0$ do not affect the score and can be ignored in these extrema. Therefore, we obtain:
\begin{equation}\label{score_max}
s_{\max}(E) = \sum_{f_j = x_j \in E} w_j x_j + b +
\sum_{f_j = x_j \not\in E}
\begin{cases}
w_j \, u_j, & \text{if } w_j \ge 0, \\
w_j \, l_j, & \text{if } w_j < 0.
\end{cases}
\end{equation}

\begin{equation}\label{score_min}
s_{\min}(E) = \sum_{f_j = x_j \in E} w_j x_j + b +
\sum_{f_j = x_j \not\in E}
\begin{cases}
w_j \, l_j, & \text{if } w_j \ge 0, \\
w_j \, u_j, & \text{if } w_j < 0.
\end{cases}
\end{equation}

Thus, a subset $E \subseteq \mathbf{x}$ is an abductive explanation of rejection
if and only if it satisfies both constraints $s_{\max}(E) < t_{+}$ and $s_{\min}(E) > t_{-}$. Then, to compute a minimum-size explanation, we must find an $E$ of minimum cardinality satisfying these inequalities. We solve this task using a $0$--$1$ ILP formulation.

We introduce binary decision variables $z_j \in \{0,1\}$ for each feature
$f_j \in \mathcal{F}$, where $z_j = 1$ iff $f_j = x_j \in E$. The objective is to minimize the explanation size:
\[
\min \sum_{j=1}^{n} z_j.
\]

To express the constraints $s_{\max}(E) < t_+$ and $s_{\min}(E) < t_-$ in terms of the variables $z_j$, we define, for each feature $f_j$, its worst-case contribution when the feature is \emph{not} fixed by the explanation, and its actual contribution when it is fixed to the observed value $x_j$. Specifically, we define:
\[
\alpha_j^{\max} =
\begin{cases}
w_j u_j, & \text{if } w_j \ge 0,\\
w_j l_j, & \text{if } w_j < 0,
\end{cases}
\qquad
\alpha_j^{\min} =
\begin{cases}
w_j l_j, & \text{if } w_j \ge 0,\\
w_j u_j, & \text{if } w_j < 0,
\end{cases}
\]
and
\[
\beta_j = w_j x_j.
\]

Intuitively, $\alpha_j^{\max}$ is the contribution of feature $f_j$ to the
largest possible score when $f_j$ is allowed to vary freely in its domain,
whereas $\alpha_j^{\min}$ is the contribution to the smallest possible score. Then, $\sum_{j=1}^n \alpha_j^{\max}$ and $\sum_{j=1}^n \alpha_j^{\min}$ correspond to the worst-case score bounds when no feature is fixed. In contrast, $\beta_j$ is the contribution of $f_j$ when it is fixed to its observed value in the instance $\mathbf{x}$.

Using these quantities, we can rewrite the worst-case score bounds for any
subset $E$ encoded by $(z_1,\ldots,z_n)$. We start from the fully worst-case scenario, where \emph{no} feature is fixed, which yields the constants $b + \sum_{j=1}^{n}\alpha_j^{\max}$ and $b + \sum_{j=1}^{n}\alpha_j^{\min}$. Then, whenever $z_j = 1$, feature $f_j$ is included in the explanation, and its
worst-case contribution $\alpha_j^{\max}$ (or $\alpha_j^{\min}$) is replaced by its actual contribution $\beta_j$. This replacement can be expressed by adding a correction term $(\beta_j - \alpha_j^{\max})$ (resp.\ $(\beta_j - \alpha_j^{\min})$) to the corresponding bound. Therefore, for any subset $E$ encoded by $(z_1, z_2, ..., z_n)$, we obtain:
\[
s_{\max}(E)
=
b + \sum_{j=1}^{n} \alpha_j^{\max}
+
\sum_{j=1}^{n} z_j(\beta_j - \alpha_j^{\max}),
\]
\[
s_{\min}(E)
=
b + \sum_{j=1}^{n} \alpha_j^{\min}
+
\sum_{j=1}^{n} z_j(\beta_j - \alpha_j^{\min}).
\]

The role of the binary variables becomes explicit in these expressions.
If $z_j=0$, the correction term vanishes and the model retains the worst-case
contribution for $f_j$. If $z_j=1$, the correction term is activated, canceling the worst-case term $\alpha_j^{\max}$ (or $\alpha_j^{\min}$) and replacing it with the observed contribution $\beta_j$. The ideia is that we start from a baseline worst-case score in which \emph{no feature is fixed}, i.e., $E=\emptyset$. In this case, every feature is allowed to vary, and therefore each feature $f_j$ contributes its worst-case value $\alpha_j^{\max}$. Hence, the baseline upper bound is $b + \sum_{j=1}^{n} \alpha_j^{\max}$. When we include a feature $f_j=x_j$ in the explanation (i.e., when $z_j=1$), we are forced to replace $\alpha_j^{\max}$ by $\beta_j$ in the upper bound. This replacement is encoded by the correction term $z_j(\beta_j - \alpha_j^{\max})$. If $z_j=1$, the correction becomes
$\beta_j - \alpha_j^{\max}$, meaning that we \emph{subtract} the worst-case
contribution $\alpha_j^{\max}$ and \emph{add} the real contribution $\beta_j$. The term $\alpha_j^{\max}$ appears twice because the first occurrence comes from the baseline worst-case score, while the second occurrence appears inside the correction term to cancel the baseline contribution whenever $z_j=1$.

Finally, since $\mathbf{x}$ is rejected, an abductive explanation must
ensure that the score remains inside the rejection interval under any admissible
assignment of the remaining features. Thus, the explanation must satisfy both:
\[
s_{\max}(E) \le t_{+}
\qquad\text{and}\qquad
s_{\min}(E) \ge t_{-}.
\]
Putting everything together, we obtain the following $0$--$1$ ILP for computing minimum-size abductive explanations for rejection:
\begin{equation}\label{ilp_reject}
\begin{aligned}
\min_{z \in \{0,1\}^n}
\quad &
\sum_{j=1}^{n} z_j
\\[2mm]
\text{s.t.}\quad &
b + \sum_{j=1}^{n} \alpha_j^{\max}
+
\sum_{j=1}^{n} z_j(\beta_j - \alpha_j^{\max})
\le t_+,
\\
&
b + \sum_{j=1}^{n} \alpha_j^{\min}
+
\sum_{j=1}^{n} z_j(\beta_j - \alpha_j^{\min})
\ge t_-.
\end{aligned}
\end{equation}

\begin{theorem}
\label{thm:ilp_opt_reject}
Let $\mathbf{x}$ be an instance such that $t_- < s(\mathbf{x}) < t_+$.
Let $(z_1^\star,\ldots,z_n^\star)$ be an optimal solution of the $0$--$1$ ILP
formulation defined in Equation~\ref{ilp_reject}, and let
$E^\star=\{f_j=x_j \mid z_j^\star=1\}$.
Then $E^\star$ is a minimum-size abductive explanation of rejection for
$\mathbf{x}$, as in Definition~\ref{def:min_abd_reject}.
\end{theorem}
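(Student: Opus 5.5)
The plan is to show that the feasible set of the ILP in Equation~\ref{ilp_reject} is exactly the set of characteristic vectors of abductive explanations of rejection, and that the ILP objective is the cardinality of the encoded subset. Optimality of $(z_1^\star,\ldots,z_n^\star)$ then transfers directly to minimality of $|E^\star|$.

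\textbf{Encoding.} First I will fix the bijection $E \mapsto z^E$ between subsets $E\subseteq\mathbf{x}$ and vectors in $\{0,1\}^n$, with $z^E_j=1$ iff $f_j=x_j\in E$. Under this bijection $\sum_j z^E_j=|E|$.

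\textbf{Algebraic identity.} Second, I will verify the identity
\[
s_{\max}(E)=b+\sum_{j}\alpha_j^{\max}+\sum_j z^E_j(\beta_j-\alpha_j^{\max}),
\]
and the analogous one for $s_{\min}$. This is a direct split of the sum over $j$ into $z_j=1$ terms, which contribute $\beta_j=w_jx_j$, and $z_j=0$ terms, which contribute $\alpha_j^{\max}$, matching Equation~\ref{score_max}. Equation~\ref{score_min} is handled the same way.

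\textbf{Extrema are attained.} Next I will argue that, for fixed $E$, the score in Equation~\ref{decomposition_score} is separable over the free features. Each term $w_jx'_j$ is monotone on the compact interval $[l_j,u_j]$, so the set of reachable scores is exactly the closed interval $[s_{\min}(E),s_{\max}(E)]$, and both endpoints are attained by a concrete completion. Hence $E$ is an abductive explanation of rejection iff the whole interval $[s_{\min}(E),s_{\max}(E)]$ lies in the rejection region.

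\textbf{Boundary case (main obstacle).} The ILP uses the non-strict inequalities $s_{\max}\le t_+$ and $s_{\min}\ge t_-$, while Definition~\ref{def:min_abd_reject} is written with strict ones. I would resolve this using the classifier semantics of Equation~\ref{reject_predicition_output}. There a score equal to $t_+$ or $t_-$ falls in the ``otherwise'' case, so $C=0$, and the rejection region is really the closed interval $[t_-,t_+]$. I would state explicitly that the explanation condition is read as $C(\mathbf{x}')=0$, consistent with the abductive-explanation definition, and conclude:
\[
E \text{ is an explanation of rejection} \iff s_{\max}(E)\le t_+ \ \text{and}\ s_{\min}(E)\ge t_- \iff z^E \text{ is ILP-feasible}.
\]
If the strict reading of Definition~\ref{def:min_abd_reject} were insisted upon, the equivalence would fail exactly when a bound hits a threshold. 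In that case I would instead replace the constraints by $\le t_+-\varepsilon$ and $\ge t_-+\varepsilon$ for a sufficiently small $\varepsilon$ (there are finitely many subsets) and rerun the same argument.

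\textbf{Feasibility and optimality.} The ILP is feasible because $z=(1,\ldots,1)$ encodes $E=\mathbf{x}$. For that choice $s_{\max}=s_{\min}=s(\mathbf{x})\in(t_-,t_+)$, so an optimum $z^\star$ exists. By the equivalence above, $E^\star$ is an abductive explanation of rejection. For any other abductive explanation of rejection $E'$, the vector $z^{E'}$ is feasible, so $|E^\star|=\sum_j z^\star_j\le\sum_j z^{E'}_j=|E'|$. This gives minimum cardinality. Since minimum cardinality implies subset-minimality, $E^\star$ is also irredundant.
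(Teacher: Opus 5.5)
Your proposal is correct and follows the same route as the paper's proof. Both show that ILP-feasible vectors are exactly the characteristic vectors of abductive explanations of rejection (via the $s_{\max}/s_{\min}$ rewriting), and then use $\sum_j z_j=|E|$ to transfer optimality to minimum cardinality; your additional points (attainment of the extrema, nonemptiness of the feasible set via $z=(1,\ldots,1)$, subset-minimality, and especially the strict versus non-strict mismatch between Definition~\ref{def:min_abd_reject} and the ILP constraints, which the paper silently resolves by treating the rejection region as the closed interval $[t_-,t_+]$ consistent with Equation~\ref{reject_predicition_output}) make explicit what the paper's proof glosses over.
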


\begin{proof}
First, consider any feasible solution $(z_1,\ldots,z_n)$ of
\eqref{ilp_reject} and let $E=\{f_j=x_j \mid z_j=1\}$ be the induced subset.

By the definitions of $s_{\max}(E)$ and $s_{\min}(E)$, the first constraint in
\eqref{ilp_reject} is equivalent to enforcing $s_{\max}(E)\le t_{+}$, while
the second constraint is equivalent to enforcing $s_{\min}(E)\ge t_{-}$.
Therefore, for every admissible completion of the features in
$\mathbf{x}\setminus E$ within their domains, the score remains in the interval
$[t_-,t_+]$, and thus $E$ is an abductive explanation of rejection.

Conversely, let $E$ be any abductive explanation of rejection for $\mathbf{x}$.
Define $(z_1,\ldots,z_n)$ by setting $z_j=1$ iff $f_j=x_j \in E$.
Since $E$ guarantees that every admissible completion stays inside $[t_-,t_+]$,
it follows that $s_{\max}(E)\le t_{+}$ and $s_{\min}(E)\ge t_{-}$.
Hence, $(z_1,\ldots,z_n)$ satisfies both constraints in \eqref{ilp_reject} and
is feasible.

Finally, observe that $\sum_{j=1}^{n} z_j = |E|$ for the set induced by
$(z_1,\ldots,z_n)$.
Thus, minimizing $\sum_{j=1}^{n} z_j$ is equivalent to minimizing the
cardinality of the explanation.
Therefore, an optimal solution $(z_1^\star,\ldots,z_n^\star)$ induces an
abductive explanation $E^\star$ of minimum size.\qed
\end{proof}

The solution $(z_1^\star, ..., z_n^\star)$ directly induces a mininum-size abductive explanation $E^\star = \{f_j = x_j \mid z^\star_j = 1\}$. By construction, $E^\star$ is sufficient to ensure that every completion of the features in $\mathbf{x} \setminus E^\star$ within their domains keeps the instance inside the reject option region $[t_-, t_+]$, and the objective guarantees that $E^\star$ has minimum cardinality.

\subsection{Explaining Classified Instances}

We now describe how MINABRO computes minimum-size abductive explanations for
instances that are \emph{classified} (i.e., not rejected). We first focus on positively classified instances, and later discuss the negative case, which follows by symmetry.


\begin{definition}[Minimum-size abductive explanation of classification]
\label{def:min_abd_class}
Let $\mathbf{x}$ be an instance and let $s$ be the score function of a linear
classifier with reject option thresholds $t_- < t_+$. Assume that $\mathbf{x}$ is
\emph{classified} as positive prediction, i.e., $s(\mathbf{x}) \ge t_+$. A set of feature assignments $E \subseteq \mathbf{x}$ is an \emph{abductive explanation of classification} for $\mathbf{x}$ if, for all $x'_1 \in [l_1,u_1],\ldots,x'_n \in [l_n,u_n]$, whenever $x'_j=x_j$ for every $f_j=x_j \in E$, the classifier output remains unchanged, i.e.,
\[
s(\{f_1=x'_1,\ldots,f_n=x'_n\}) \ge t_+
\]

Moreover, $E$ is a \emph{minimum-size} abductive explanation of classification if
it has minimum cardinality among all abductive explanations of classification
for $\mathbf{x}$.
\end{definition}

Our approach is similar to the greedy method of~\cite{marquessilva2020} for linear classifiers without a reject option. The main difference is that, in
our setting, explanations must guarantee that the score remains above the
positive rejection threshold $t_{+}$, whereas in~\cite{marquessilva2020} the
corresponding condition is defined with respect to the standard decision
boundary at $0$.

As in the rejected case, for a fixed $E$ the score can be decomposed into a fixed part and a free part, as in Equation~\ref{decomposition_score}. To guarantee that the prediction remains positive under any admissible values of the remaining features, it suffices to enforce the \emph{worst-case lower bound} on the score. Thus, we require $s_{\min}(E) \ge t_{+}$, where $s_{\min}(E)$ is defined as in Equation~\ref{score_min}. To compute a minimum-size explanation efficiently, we rewrite $s_{\min}(E)$ as a baseline term plus additive \emph{gains} obtained by fixing features. Again, we use $\alpha_j^{\min}$ and $\beta_j$ as in Subsection~\ref{minabro_rejected}. If no feature is fixed, the worst-case lower bound becomes $b + \sum_{j=1}^{n} \alpha_j^{\min}$. Whenever $f_j=x_j$ is included in the explanation, the contribution $\alpha_j^{\min}$ is replaced by $\beta_j$, increasing the worst-case lower bound by $\delta_j^{+} = \beta_j - \alpha_j^{\min}$. By construction, $\delta_j^{+} \ge 0$ for all $j$, since $\alpha_j^{\min}$ is the minimum admissible contribution of feature $j$. Therefore, for any subset $E$ we can write:
\[
s_{\min}(E) = b + \sum_{j=1}^{n} \alpha_j^{\min} + \sum_{f_j=x_j \in E} \delta_j^{+}.
\]
Thus, the condition $s_{\min}(E) \ge t_{+}$ is equivalent to:
\[
\sum_{f_j=x_j \in E} \delta_j^{+} \ge t_{+} - (b + \sum_{j=1}^{n} \alpha_j^{\min}).
\]

Then, $t_{+} - (b + \sum_{j=1}^{n} \alpha_j^{\min})$ denotes the \emph{required margin} that must be covered by fixing features. Since each feature has unit cost in the objective, as we require minimum size, and contributes a nonnegative gain $\delta_j^{+}$, a minimum-size explanation can be obtained by selecting features in decreasing order of $\delta_j^{+}$ until the inequality is satisfied.

MINABRO computes a minimum-size explanation for a positive instance by sorting
all features by $\delta_j^{+}$ in nonincreasing order and adding them to $E$
until $s_{\min}(E) \ge t_{+}$ holds. Computing $\delta_j^{+}$ for all features requires linear time, i.e., $O(n)$ where $n$ is the number of features. Scanning the sorted list to determine the smallest feasible prefix also requires $O(n)$ time. Therefore, the overall running time is dominated by the sorting step, yielding a total time of $O(n\log n)$. 

\begin{proposition}
\label{prop:greedy_positive_optimal}
Let $\mathbf{x}$ be an instance such that $s(\mathbf{x}) \ge t_{+}$. Consider the greedy algorithm that starts with $E=\emptyset$ and iteratively
adds a feature $f_j=x_j$ with maximum $\delta_j^{+}$ among the remaining ones,
until $\sum_{f_j=x_j\in E}\delta_j^{+}\ge t_{+} - (b + \sum_{j=1}^{n} \alpha_j^{\min})$ (equivalently, until $s_{\min}(E)\ge t_{+}$). Then, the greedy algorithm returns a minimum-size abductive explanation for the positive prediction.
\end{proposition}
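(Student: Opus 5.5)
The plan is to reduce the statement to a cardinality-constrained covering problem with nonnegative weights and then use an exchange argument.

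Write $M = t_{+} - (b + \sum_{j=1}^{n}\alpha_j^{\min})$ for the required margin. By the decomposition derived above, $s_{\min}(E) = b + \sum_j \alpha_j^{\min} + \sum_{f_j=x_j\in E}\delta_j^{+}$. Moreover, $s_{\min}(E)$ is exactly the infimum, in fact the attained minimum, of the score over all admissible completions of $\mathbf{x}\setminus E$. So $E$ is an abductive explanation of classification in the sense of Definition~\ref{def:min_abd_class} if and only if $\sum_{f_j=x_j\in E}\delta_j^{+}\ge M$. Call such a set \emph{feasible}. Then the minimum-size explanations are precisely the feasible sets of minimum cardinality, and the task reduces to showing that greedy returns one.

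First I check that greedy terminates with a feasible set. Taking $E=\mathbf{x}$ gives $s_{\min}(\mathbf{x}) = s(\mathbf{x}) \ge t_{+}$, so the full set is feasible. Since every $\delta_j^{+}\ge 0$, the prefix sums are nondecreasing, so some prefix reaches $M$. Let $E_g$ be the first such prefix, with size $k$. If $M\le 0$ then $E_g=\emptyset$ and there is nothing more to prove.

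The main step is optimality, via an exchange argument. Let $E'$ be any feasible set, and suppose for contradiction that $|E'| = m < k$. Order the features so that $\delta_{(1)}^{+}\ge\cdots\ge\delta_{(n)}^{+}$; greedy picks exactly the first indices in this order, with ties broken arbitrarily. For any set $S$ of size $m$, the sum of its gains is at most the sum of the $m$ largest gains, $\sum_{i\le m}\delta_{(i)}^{+}$. To see this, sort the elements of $S$ by gain; the $i$-th largest element of $S$ has gain at most $\delta_{(i)}^{+}$. Since $E'$ is feasible, $M \le \sum_{E'}\delta^{+} \le \sum_{i\le m}\delta_{(i)}^{+}$. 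But the right-hand side is the greedy prefix of length $m<k$, so greedy would have stopped at step $m$, contradicting the minimality of $k$. Hence $|E_g|\le |E'|$ for every feasible $E'$.

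Finally, I verify subset-minimality, so that $E_g$ is a genuine abductive explanation and not merely a sufficient set. Every proper subset of $E_g$ is smaller than $E_g$, and by the exchange argument any feasible set has size at least $k$, so no proper subset is feasible. The only delicate point I expect is handling ties and zero gains, which the exchange argument covers uniformly. A secondary check is confirming that $s_{\min}(E)$ equals the true worst case; this holds because the box domain makes the minimum separable across features, each attained at $l_j$ or $u_j$ as in Equation~\ref{score_min}.
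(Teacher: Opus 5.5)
Your proof is correct and follows the paper's argument. Both proofs bound the total gain of any $m$-element set by the sum of the $m$ largest $\delta_j^{+}$, and derive a contradiction with the greedy stopping rule for any explanation of size below $k$. Your extra checks fill in points the paper leaves implicit: that $E=\mathbf{x}$ is feasible, so greedy terminates; the justification of the top-$m$ bound; the case $M\le 0$; and the subset-minimality of $E_g$.
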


\begin{proof}

Let the greedy algorithm select features in nonincreasing order of gains:
$\delta_{1}^{+}\ge \delta_{2}^{+}\ge \cdots \ge \delta_{n}^{+}$,
and let $E_g$ be the set returned by the algorithm.
By definition, the greedy algorithm returns $E_g$ with $k$ features such that 
\begin{equation}
\begin{aligned}
\sum_{i=1}^{k}\delta_{i}^{+}
&\ge t_{+} - \left(b + \sum_{j=1}^{n} \alpha_j^{\min}\right),
\\
\sum_{i=1}^{k-1}\delta_{i}^{+}
&< t_{+} - \left(b + \sum_{j=1}^{n} \alpha_j^{\min}\right).
\end{aligned}
\label{eq:greedy_prefix}
\end{equation}


We now show that no abductive explanation with fewer than $k$ features exists.
Let $E$ be any subset of features with $|E|=m$.
Since $\delta_{1}^{+},\dots,\delta_{n}^{+}$ are sorted in non-increasing
order, the sum of the $m$ largest gains upper-bounds the sum of gains of any
$m$ features. Hence,
\begin{equation}
\label{eq:topm_dominates}
\sum_{f_j=x_j\in E}\delta_j^{+}
\le
\sum_{i=1}^{m}\delta_{i}^{+}.
\end{equation}

Assume, for contradiction, that there exists an abductive explanation
$E^\star$ with $|E^\star|\le k-1$.
Then $E^\star$ satisfies the constraint $\sum_{f_j=x_j\in E^\star}\delta_j^{+}\ge t_{+} - \left(b + \sum_{j=1}^{n} \alpha_j^{\min}\right)$.
Combining this with~\eqref{eq:topm_dominates} (for $m=|E^\star|\le k-1$) yields
\[
t_{+} - \left(b + \sum_{j=1}^{n} \alpha_j^{\min}\right)
\le
\sum_{f_j=x_j\in E^\star}\delta_j^{+}
\le
\sum_{i=1}^{|E^\star|}\delta_{i}^{+}
\le
\sum_{i=1}^{k-1}\delta_{i}^{+},
\]
which contradicts~\eqref{eq:greedy_prefix}.
Therefore, no explanation with fewer than $k$ features exists. Since $E_g$ has size $k$ and is feasible by construction, it is a minimum-size abductive explanation. \qed
\end{proof}

The construction for negative predictions is analogous. Let $\mathbf{x}$ be an instance such that $s(\mathbf{x}) \le t_{-}$. In this case, an abductive explanation $E$ must ensure that the score remains \emph{below} the rejection threshold under any admissible perturbation of the
unexplained features, i.e., for all $x'_1 \in [l_1,u_1],\ldots,x'_n \in [l_n,u_n]$, if $x'_j=x_j$ for every $f_j=x_j \in E$, then: $s(\{f_1=x'_1,\ldots,f_n=x'_n\}) \le t_{-}$.

Therefore, instead of reasoning about the worst-case \emph{minimum} score as in
the positive case, we consider the worst-case \emph{maximum} score $s_{\max}(E)$.
The explanation condition becomes $s_{\max}(E) \le t_-$. Therefore use
$\alpha_j^{\max}$ instead of $\alpha_j^{\min}$ in the construction. Fixing a feature $f_j$ replaces its worst-case contribution $\alpha_j^{\max}$ by
the observed contribution $\beta_j = w_j x_j$. Hence, we define $\delta_j^{-} = \alpha_j^{\max} - \beta_j$, which measures how much the worst-case maximum score decreases when $f_j$ is included in the explanation.

As before, this yields a greedy procedure where features are selected according
to their ability to decrease the worst-case maximum score, and the same
optimality argument applies (up to symmetry), following the proof structure
of~\cite{marquessilva2020}. The time complexity of the negative case is the same as in the positive case: computing all $\delta_j^{-}$ takes $O(n)$ time, sorting them takes $O(n\log n)$ time, and selecting the smallest feasible prefix takes $O(n)$ time, leading again to an overall complexity of $O(n\log n)$.

\section{Experimental Evaluation}

In this section, we evaluate the performance of MINABRO from two complementary perspectives: (i) computational efficiency and (ii) conciseness of explanations. The main goal is to analyze the trade-off between explanatory rigor and practical feasibility by comparing MINABRO with two reference methods. The first one is Anchors \cite{ribeiro2018anchors}, a widely used heuristic and model-agnostic explainer. Since Anchors does not explicitly account for abstention, we adapt it by modeling rejection as an additional class, and thus considering a three-class classifier with labels in $\{-1, 0, +1\}$. The second one is the method in~\cite{rochafilho2023}, which we refer to as AbLinRO throughout the paper. Importantly, AbLinRO computes abductive explanations but does not necessarily guarantee minimum-size explanations. Importantly, both MINABRO and AbLinRO compute abductive explanations that are formally guaranteed to be correct with respect to the underlying classifier. That is, the explanations returned by these methods are provably sufficient to preserve the model's prediction (or rejection). Therefore, unlike heuristic explainers, their fidelity to the model semantics is guaranteed by construction. For this reason, we do not evaluate explanation fidelity in the experiments, and instead focus on computational efficiency and explanation conciseness.

\subsection{Experimental Setup}

The experiments were conducted on ten public binary classification datasets from the UCI Machine Learning Repository\footnote{https://archive.ics.uci.edu} and OpenML~\cite{vanschoren2014openml}, selected to cover different levels of dimensionality. All features were scaled to the interval $[0,1]$, following a standard preprocessing procedure. In our experiments, we treat this interval as the effective domain of each feature, i.e., we assume $f_i \in [0,1]$ for every attribute.

In our experiments, we use logistic regression as a representative linear classifier, trained using a stratified hold-out strategy. Each classifier was trained on 70\% of the original dataset using the \texttt{liblinear} solver with $l_2$ regularization from \texttt{scikit-learn}. To define the rejection region, we fixed the rejection cost to $w_r = 0.24$ for all datasets (as in \cite{rochafilho2023}), with the exception of the \textit{Credit Card} dataset. For the latter, a cost of $w_r = 0.24$ resulted in no instances from the test set falling into the rejection region. Therefore, we adopted a lower rejection cost of $w_r = 0.04$. The rejection thresholds $t_{-}$ and $t_{+}$ were obtained exclusively from training data by minimizing the empirical risk, as described in Section~\ref{background} and in line with established practices \cite{chow1970,mesquita2016classification}. For each dataset, the calibrated thresholds define a dataset-specific rejection region in the classifier score space, whose width is given by $t_{+} - t_{-}$. This quantity provides a direct measure of the degree of conservatism adopted by the classifier: narrower rejection zones correspond to more selective decision policies, while wider zones indicate a higher tolerance to uncertainty.

For rejected instances, MINABRO computes explanations by solving a single $0$--$1$ ILP optimization problem. This formulation was implemented using \texttt{PuLP}\footnote{https://github.com/coin-or/pulp}, with default solver settings. AbLinRO was also implemented using \texttt{PuLP}; however, it constructs explanations through multiple calls to a linear programming (LP) solver, one per feature, during the explanation construction process. Consequently, AbLinRO relies on solving several LP problems for both classified and rejected instances, whereas MINABRO requires solving a single ILP only for rejected instances.

The datasets used in our experiments are summarized in Table~\ref{tab:datasets_estatisticas}, together with the number of features, the calibrated rejection thresholds, the corresponding rejection width, and the number of instances for which explanations were computed. \textit{Rejection Rate} indicates the percentage of test instances for which the model abstained from making a prediction. \textit{Accuracy w/ RO} refers to the model's performance considering only the instances that were not rejected (i.e., the accuracy on the accepted set). \textit{Accuracy w/o RO} represents the accuracy the model would have achieved on all the test set had it been forced to classify them without a reject option. For most datasets, we computed explanations for all available instances in the test set. However, due to computational constraints imposed by Anchors and AbLinRO, we selected a subset of the test set for the \textit{Credit Card}, \textit{Covertype}, and \textit{MNIST (3 vs 8)} datasets to compute explanations. All experiments were executed on machine running Windows~11, equipped with an Intel Core i5-11400H processor (2.70 GHz) and 12~GB of RAM. The source code to reproduce our experimental results is publicly available\footnote{\url{https://github.com/gleilsonpedro/Explanation_With_Rejection_final}}.

\begin{table}[htbp]
\centering
\caption{Summary of datasets and experimental setup, including the number of instances explained, classifier calibration parameters, and classifier performance metrics.}
\label{tab:datasets_estatisticas}
\resizebox{\textwidth}{!}{%
\setlength{\tabcolsep}{3pt}
\begin{tabular}{lrcrrrrrr}
\toprule
\textbf{Dataset} & 
\textbf{\makecell[c]{\# Explained\\Instances}} & 
\textbf{\makecell[c]{\# Features}} & 
\textbf{\makecell[c]{$t^+$\\}} & 
\textbf{\makecell[c]{$t^-$\\}} & 
\textbf{\makecell[c]{Rejection\\Width}} & 
\textbf{\makecell[c]{Rejection\\Rate (\%)}} & 
\textbf{\makecell[c]{Accuracy w/o\\RO (\%)}} &
\textbf{\makecell[c]{Accuracy\\w/ RO (\%)}} \\
\midrule
Banknote         & 412 & 4   & 0.01 & $-0.35$ & 0.36 & 58.74 & 61.41 & 94.71 \\
Vertebral Col.   & 93  & 6   & 1.10 & $-0.07$ & 1.17 & 37.63 & 76.34 & 86.21 \\
Pima Indians     & 231 & 8   & 0.07 & $-0.01$ & 0.08 & 1.30  & 74.46 & 75.00 \\
Heart Disease    & 90  & 13  & 0.19 & $-0.11$ & 0.30 & 42.22 & 85.56 & 96.15 \\
Credit Card      & 2563 & 29  & 4.47 & $-0.01$ & 4.48 & 0.35  & 99.73& 99.96 \\
Breast Cancer    & 171 & 30  & 0.97 & $-0.01$ & 0.98 & 19.88 & 93.57 & 98.54 \\
Covertype        & 742 & 54  & 0.25 & $-0.19$ & 0.44 & 11.19 & 76.68 & 79.21 \\
Spambase         & 1381& 57  & 0.30 & $-0.48$ & 0.78 & 68.21 & 62.27 & 94.99 \\
Sonar            & 63  & 60  & 0.26 & $-0.01$ & 0.27 & 71.43 & 68.25 & 94.44 \\
MNIST (3 vs 8)   & 41  & 784 & 0.82 & $-1.31$ & 2.13 & 7.32  & 90.24 & 94.74 \\
\bottomrule
\end{tabular}%
}
\vspace{1mm}
\begin{minipage}{\textwidth}
\scriptsize
\end{minipage}
\end{table}

\begin{table}[H]
\centering
\caption{Average execution time and standard deviation (ms) for classified and rejected instances.}
\label{tab:runtime_unified}
\resizebox{\textwidth}{!}{%
\setlength{\tabcolsep}{3pt} 
\begin{tabular}{lcccccc}
\hline
\multirow{2}{*}{\textbf{Dataset}} & \multicolumn{2}{c}{\textbf{MINABRO}} & \multicolumn{2}{c}{\textbf{Anchors}} & \multicolumn{2}{c}{\textbf{AbLinRO}} \\
\cline{2-7}
 & \textbf{Clas.} & \textbf{Rej.} & \textbf{Clas.} & \textbf{Rej.} & \textbf{Clas.} & \textbf{Rej.} \\
\hline
Banknote & 1.38 $\pm$ 0.28 & 1.47 $\pm$ 0.26 & 140.89 $\pm$ 266.85 & 64.97 $\pm$ 87.47 & 167.94 $\pm$ 46.66 & 237.13 $\pm$ 48.21 \\
Vertebral Col. & 1.43 $\pm$ 0.24 & 1.61 $\pm$ 0.26 & 422.66 $\pm$ 582.89 & 159.62 $\pm$ 102.17 & 284.73 $\pm$ 29.44 & 383.43 $\pm$ 56.99 \\
Pima Indians & 1.64 $\pm$ 0.48 & 1.85 $\pm$ 0.08 & 501.58 $\pm$ 447.69 & 1332.36 $\pm$ 889.97 & 345.53 $\pm$ 71.27 & 332.56 $\pm$ 10.83 \\
Heart Disease & 1.64 $\pm$ 0.27 & 1.96 $\pm$ 0.21 & 1099.30 $\pm$ 373.09 & 1174.24 $\pm$ 391.55 & 528.83 $\pm$ 42.57 & 973.86 $\pm$ 128.96 \\
Credit Card & 1.92 $\pm$ 0.40 & 2.80 $\pm$ 0.84 & 189.01 $\pm$ 325.32 & 32682.01 $\pm$ 15838.11 & 1192.66 $\pm$ 142.95 & 1678.53 $\pm$ 371.69 \\
Breast Cancer & 1.60 $\pm$ 0.53 & 1.80 $\pm$ 0.69 & 6526.21 $\pm$ 6557.96 & 5003.79 $\pm$ 1140.48 & 781.94 $\pm$ 56.07 & 1427.27 $\pm$ 107.00 \\
Covertype & 2.25 $\pm$ 0.52 & 4.78 $\pm$ 0.50 & 34569.58 $\pm$ 30987.04 & 67035.24 $\pm$ 48883.12 & 2554.02 $\pm$ 401.04 & 3359.55 $\pm$ 193.80 \\
Spambase & 2.92 $\pm$ 0.75 & 5.51 $\pm$ 1.47 & 5291.85 $\pm$ 48700.48 & 14507.73 $\pm$ 90224.29 & 2413.84 $\pm$ 332.13 & 3785.25 $\pm$ 480.74 \\
Sonar & 3.44 $\pm$ 0.62 & 4.96 $\pm$ 0.55 & 32436.80 $\pm$ 31821.72 & 8098.75 $\pm$ 8592.18 & 2699.58 $\pm$ 319.70 & 4572.22 $\pm$ 719.92 \\
MNIST  & 23.24 $\pm$ 4.92 & 167.26 $\pm$ 12.71 & 257871.79 $\pm$ 312222.84 & 300590.33 $\pm$ 259829.42 & 67574.06 $\pm$ 4091.18 & 64727.33 $\pm$ 4553.26 \\
\hline
\end{tabular}%
}
\end{table}

\subsection{Computational Efficiency}

Computational efficiency is a critical factor for the adoption of minimum-size abductive explanations in real-world scenarios, particularly in interactive or
time-sensitive settings. Table~\ref{tab:runtime_unified} reports the average execution time \emph{to compute explanations}, separating the results for \emph{classified} and \emph{rejected} instances. Classified instances correspond to predictions that fall outside the rejection zone. Rejected instances represent the setting where MINABRO relies on a $0$--$1$ ILP formulation to compute minimum-size abductive explanations of rejection, i.e., by solving an NP-hard optimization problem. Importantly, MINABRO computes explanations with full fidelity to the underlying model and provably minimum size. In contrast, Anchors provides no formal fidelity guarantees, whereas AbLinRO guarantees fidelity but does not guarantee minimum-size explanations.

The results show that MINABRO consistently outperforms all reference methods in terms of execution time, both for classified and rejected instances. Notably, the runtime of MINABRO remains low and stable even for high-dimensional datasets such as \emph{Spambase}, \emph{Sonar}, and \emph{MNIST} compared to the reference methods.

In contrast, the heuristic sampling-based approach Anchors and the
linear-programming-based method AbLinRO exhibit a substantial increase in computational cost as the number of features grows. This behavior highlights the advantage of MINABRO, which exploits the linearity of the classifier and the bounded domains of input features to compute minimum-size explanations efficiently for classified instances. Remarkably, even for rejected instances, where MINABRO relies on a $0$--$1$ ILP formulation and thus solves an NP-hard optimization problem, it remains consistently faster in practice. A plausible explanation is that AbLinRO requires multiple calls to an LP (Linear Programming) solver, one per feature during explanation construction, whereas
MINABRO solves a single $0$--$1$ ILP instance with a compact set of linear constraints.

Overall, these results confirm that MINABRO achieves consistently low execution times, making it particularly suitable for scalable and time-critical
applications.

\begin{table}[htbp]
\centering
\caption{Average explanation size (number of features) for classified and rejected instances.}
\label{tab:explanation_size}
\resizebox{\textwidth}{!}{%
\setlength{\tabcolsep}{4pt} 
\begin{tabular}{lcccccc}
\toprule
\multirow{2}{*}{\textbf{Dataset}} & 
\multicolumn{2}{c}{\textbf{MINABRO}} & 
\multicolumn{2}{c}{\textbf{Anchor}} & 
\multicolumn{2}{c}{\textbf{AbLinRO}} \\
\cmidrule(lr){2-3} \cmidrule(lr){4-5} \cmidrule(lr){6-7}
 & \textbf{Acc.} & \textbf{Rej.} & \textbf{Acc.} & \textbf{Rej.} & \textbf{Acc.} & \textbf{Rej.} \\
\midrule
Banknote         & 2.86 $\pm$ 0.65 & 2.60 $\pm$ 0.70  & 1.37 $\pm$ 0.99 & 1.04 $\pm$ 0.34 & 2.86 $\pm$ 0.65  & 2.73 $\pm$ 0.67 \\
Vertebral Col.   & 4.34 $\pm$ 1.04 & 5.06 $\pm$ 0.23  & 2.03 $\pm$ 1.57 & 1.86 $\pm$ 0.54 & 4.41 $\pm$ 0.97  & 5.06 $\pm$ 0.23 \\
Pima Indians     & 4.17 $\pm$ 1.20 & 8.00 $\pm$ 0.00  & 2.49 $\pm$ 0.97 & 5.33 $\pm$ 1.70 & 4.60 $\pm$ 1.22  & 8.00 $\pm$ 0.00 \\
Heart Dis.       & 7.15 $\pm$ 1.41 & 6.84 $\pm$ 1.27  & 2.63 $\pm$ 0.66 & 2.79 $\pm$ 0.73 & 7.44 $\pm$ 1.46  & 6.61 $\pm$ 1.51 \\
Credit Card      & 15.76 $\pm$ 1.42 & 17.33 $\pm$ 5.10 & 0.19 $\pm$ 0.39 & 7.56 $\pm$ 0.83 & 20.33 $\pm$ 1.56 & 20.44 $\pm$ 3.95 \\
Breast Cancer    & 1.69 $\pm$ 0.46 & 2.00 $\pm$ 0.00  & 2.05 $\pm$ 0.84 & 2.00 $\pm$ 0.00 & 1.69 $\pm$ 0.46  & 2.00 $\pm$ 0.00 \\
Covertype        & 20.06 $\pm$ 5.79 & 43.57 $\pm$ 0.70 & 4.47 $\pm$ 1.87 & 5.41 $\pm$ 2.25 & 22.64 $\pm$ 5.90 & 44.51 $\pm$ 1.08 \\
Spambase         & 28.20 $\pm$ 2.59 & 44.03 $\pm$ 1.53 & 1.24 $\pm$ 2.93 & 2.17 $\pm$ 7.05 & 30.19 $\pm$ 4.26 & 48.89 $\pm$ 1.34 \\
Sonar            & 42.33 $\pm$ 8.43 & 39.33 $\pm$ 4.65 & 2.72 $\pm$ 0.89 & 2.04 $\pm$ 0.29 & 52.44 $\pm$ 3.93 & 48.80 $\pm$ 4.06 \\
MNIST (3 vs 8)   & 253.2 $\pm$ 42.5 & 533.0 $\pm$ 9.3  & 0.16 $\pm$ 1.0  & 0.00 $\pm$ 0.00 & 361.3 $\pm$ 52.0 & 573.3 $\pm$ 7.4 \\
\bottomrule
\end{tabular}%
}
\end{table}

\subsection{Explanation Size}

We evaluate the size of the generated explanations, i.e., the average number of features included in an explanation. For rejected instances, fidelity
requires simultaneously enforcing both lower and upper constraints on the
classifier score. For this reason, we report explanation sizes separately for classified and rejected instances, enabling a more fine-grained comparison across methods. Table~\ref{tab:explanation_size} reports the average explanation size obtained by each method.





A clear gap can be observed between Anchors and the logic-based methods. Anchors produces substantially shorter explanations (e.g., 1.24 features for classified instances on \emph{Spambase}). This behavior reflects a fundamental methodological difference: Anchors is a sampling-based heuristic that does not enforce worst-case guarantees, whereas MINABRO and AbLinRO compute explanations that are fully faithful to the classifier semantics. Therefore, the larger explanation sizes observed for MINABRO when compared to Anchors should not be interpreted as a weakness, but as the natural cost of ensuring full fidelity guarantees.

The results also highlight the limitations of the heuristic approach on high-dimensional or imbalanced datasets. For instance, on \emph{Credit Card} and \emph{MNIST}, Anchors yields mean explanation sizes close to zero (e.g., $0.19$ and $0.16$, respectively). This occurs because Anchors seeks a rule with a precision $\geq 95\%$. In highly imbalanced scenarios (like \emph{Credit Card}), the ``empty rule'' (using no features) often satisfies this precision threshold simply due to the class prior probability. Similarly, in high-dimensional spaces, the sampling strategy struggles to converge to complex rules, frequently defaulting to trivial explanations that fail to capture the true decision boundary. In contrast, MINABRO is robust to these factors, consistently identifying the actual set of features responsible for the model's prediction.

When comparing MINABRO with AbLinRO, the results show that MINABRO typically
matches or improves conciseness. This difference is expected, since AbLinRO does not guarantee minimum size, whereas MINABRO explicitly optimizes for minimum-size explanations in both the classified and rejected settings.

It is also interesting to note that, on average, explanations for rejected
instances tend to be larger than those for classified ones across most datasets. This behavior is likely explained by the fact that explanations of rejection must simultaneously enforce two constraints on the classifier score, namely that it remains above $t_{-}$ and below $t_{+}$. In contrast, explanations for classified instances only need to preserve a single-sided decision constraint.

\subsection{Visual Comparison on MNIST Rejected Instances}

To complement the quantitative evaluation, we provide a visual comparison between the explanations produced by MINABRO and AbLinRO for rejected instances. Figure~\ref{fig:mnist_rejected_example} illustrates a representative example from the MNIST (3 vs.\ 8) task. In this example, the original digit is rejected by the trained logistic regression classifier with reject option. The left image shows the original input instance, while the subsequent images highlight the pixels included in the respective explanations. Red pixels correspond to features selected as part of the abductive explanation.

\begin{figure}[H]
\centering
\includegraphics[width=\linewidth]{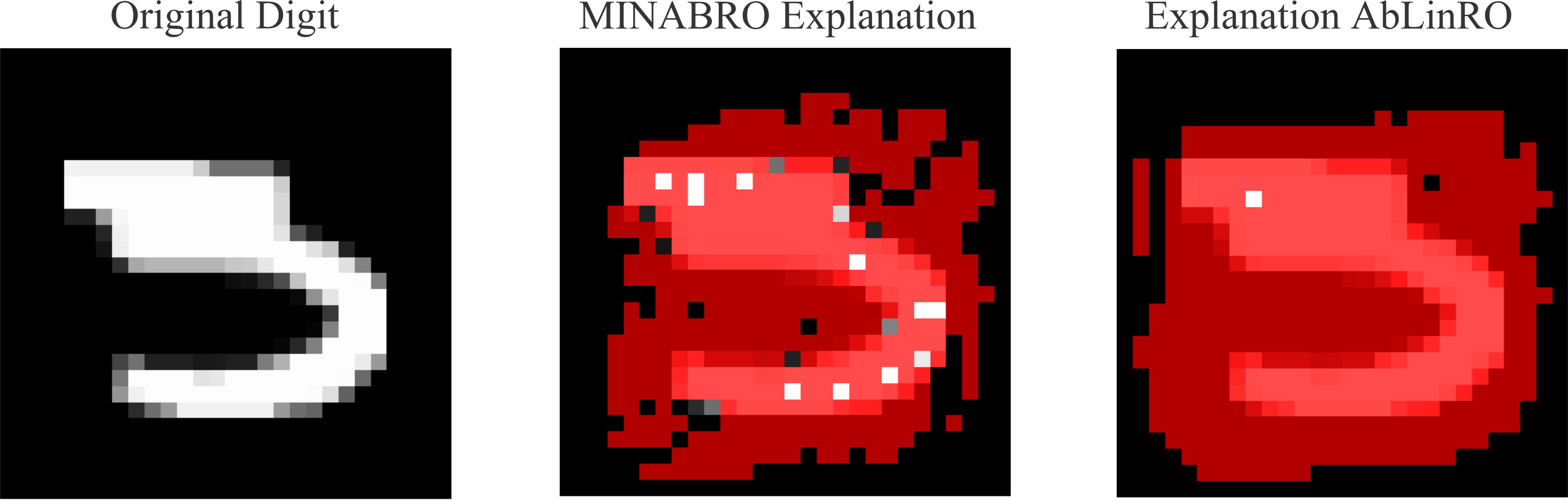}
\caption{Comparison of explanations for a rejected digit in the MNIST (3 vs.\ 8) task.
Red pixels indicate features included in the explanation.
\textbf{Left:} Original digit, which is rejected by the trained logistic regression classifier with reject option. 
\textbf{Center:} MINABRO explanation. 
\textbf{Right:} AbLinRO explanation.}
\label{fig:mnist_rejected_example}
\end{figure}

Both methods generate explanations that are sufficient to guarantee that the
instance remains inside the rejection region. More precisely, the selected pixels form a subset of features such that, if these pixel values are fixed and all other pixels are allowed to vary arbitrarily within their domain, the classifier is still guaranteed to output rejection. However, the explanation produced by AbLinRO covers a substantially larger portion of the image, fixing 495 out of 784 pixels (63.1\%), while MINABRO selects only 417 pixels (53.2\%). This visual comparison highlights the ability of MINABRO to produce more compact explanations for rejected instances. It is important to note that these explanations characterize the decision mechanism of the trained logistic regression with a reject option, rather than human visual intuition about the digit. In particular, the highlighted pixels may lie outside the visible stroke of the digit, reflecting the fact that the linear classifier may rely on background or contextual patterns learned during training.

\section{Conclusions and Future Work}


In this work, we introduced \textbf{MINABRO}, an efficient method for
computing minimum-size abductive explanations for linear classifiers equipped
with a reject option. For instances that are \emph{classified}, MINABRO computes minimum-size explanations efficiently by exploiting the linear structure of the classifier on the bounded feature domains and a greedy procedure inspired by the approach of \cite{marquessilva2020}. For \emph{rejected} instances, where fidelity requires simultaneously enforcing both lower and upper constraints on the classifier
score, MINABRO relies on a $0$--$1$ ILP formulation that optimizes explanation
cardinality under the reject-option semantics.

The experimental results show that MINABRO consistently achieves substantial
runtime improvements over the reference methods across all datasets, including
high-dimensional settings. Importantly, this advantage is maintained even in the
rejected setting, despite the need to solve an NP-hard optimization problem.
At the same time, MINABRO produces explanations that are fully faithful by
construction and, in contrast to existing approaches, guarantees minimum
cardinality. By ensuring the production of minimum-size explanations, our method significantly enhances interpretability, as more compact justifications are easier for human experts to inspect and validate. This advantage is particularly relevant for rejected instances, which inherently tend to yield larger explanations due to the need to satisfy the dual bounding constraints that define the rejection region.

Overall, these results indicate that MINABRO is a practical approach for computing formally correct and minimum-size abductive explanations for linear classifiers with reject option. The ability to provide concise justifications for rejections is of paramount importance in practical applications, as it allows practitioners to diagnose the specific factors leading to model uncertainty, identify potential biases, and determine when human intervention is necessary for safe decision-making. In particular, explaining abstentions contributes to trustworthy AI by making explicit the conditions under which a system deliberately refrains from issuing a prediction, which is a key mechanism for safe AI deployment and responsible AI risk management. Furthermore, by ensuring both minimum-size and correctness of explanations, our method effectively supports high-stakes applications where the reliability and predictability of the provided justifications are essential requirements.





As future work, we plan to extend the proposed framework to multiclass scenarios equipped with a reject option and to investigate the generation of inflated abductive explanations \cite{izza2024delivering,rocha2025generalizing,izza2025most}. The latter allows for more general and informative justifications by replacing specific feature values with maximal ranges that still guarantee the model's decision, thus providing a broader understanding of the classifier. In addition, we intend to explore the use of our approach to provide more reliable model-agnostic explanations by approximating complex non-linear models, such as deep neural networks, with local linear surrogates equipped with a reject option. In this framework, the rejection region of the local linear model can be designed to represent areas where the surrogate cannot faithfully mimic the complex model's behavior or where the underlying model itself exhibits high uncertainty. By applying MINABRO to these local approximations, we can generate minimum-size explanations that formally justify why a prediction is made or, more importantly, provide rigorous reasons for why a decision should be rejected due to lack of local reliability or model confidence.




\bibliographystyle{splncs04}
\bibliography{referencias}
\end{document}